\pdfoutput=1

\documentclass[11pt]{article}

\usepackage[]{ACL2023}

\usepackage{times}
\usepackage{latexsym}

\usepackage[T1]{fontenc}

\usepackage[utf8]{inputenc}

\usepackage{microtype}

\usepackage{color}
\usepackage{listings}

\usepackage{amsthm,amssymb}
\usepackage{paralist} 
\usepackage{bbm}
\usepackage{amsmath}
\usepackage{graphicx}
\usepackage{tabu}
\usepackage{placeins}
\usepackage{float}
\usepackage{booktabs}
\usepackage{hyperref}
\usepackage{multirow}
\usepackage{caption}
\newtheorem{theorem}{Theorem}
\usepackage{subcaption}
\usepackage{cleveref}
\usepackage{tikz}
\usepackage{tikzsymbols}
\usepackage{pgfplots}
\usepgfplotslibrary{colormaps}
\usepgfplotslibrary{groupplots}
\pgfplotsset{compat=1.12}
\usetikzlibrary{calc,patterns}

\usepackage{amsmath,amsfonts,bm}

\def\eqref#1{equation~\ref{#1}}

\def\1{\bm{1}}

\def\vzero{{\bm{0}}}

\def\vmu{{\bm{\mu}}}

\def\ve{{\bm{e}}}

\def\vh{{\bm{h}}}

\def\vk{{\bm{k}}}

\def\vq{{\bm{q}}}

\def\vu{{\bm{u}}}
\def\vv{{\bm{v}}}
\def\vw{{\bm{w}}}
\def\vx{{\bm{x}}}
\def\vy{{\bm{y}}}

\def\vmu{{\bm{\mu}}}
\def\v1{{\vec{\bm{\mathbbm{1}}}}}
\def\vzero{{\vec{\bm{0}}}}

\def\mA{{\bm{A}}}

\def\mK{{\bm{K}}}

\def\mM{{\bm{M}}}

\def\mP{{\bm{P}}}
\def\mQ{{\bm{Q}}}

\DeclareMathAlphabet{\mathsfit}{\encodingdefault}{\sfdefault}{m}{sl}
\SetMathAlphabet{\mathsfit}{bold}{\encodingdefault}{\sfdefault}{bx}{n}

\newcommand{\uri}[1]{{\color{blue}Uri:[#1]}}
\newcommand{\ey}[1]{{\color{purple}Eran:[#1]}}
\newcommand{\shaked}[1]{{\color{red}Shaked:[#1]}}

\renewcommand{\shaked}[1]{}
\renewcommand{\uri}[1]{}
\renewcommand{\ey}[1]{}

\title{On the Expressivity Role of LayerNorm in Transformers' Attention}

\author{Shaked Brody$^\dagger$, Uri Alon$^\spadesuit$, Eran Yahav$^\dagger$\\ 
$^\dagger$ Technion, Israel \\   
$^\spadesuit$ Language Technologies Institute, Carnegie Mellon University, USA \\ 
\texttt{\{shakedbr,yahave\}@cs.technion.ac.il}\\
\texttt{ualon@cs.cmu.edu}\\
}

\begin{document}
\maketitle
    \begin{abstract}
        Layer Normalization (LayerNorm) is an inherent component in all Transformer-based models.
        In this paper, we show that LayerNorm is crucial to the expressivity of the multi-head attention layer that follows it. This is in contrast to the common belief that LayerNorm's only role is to normalize the activations during the forward pass, and their gradients during the backward pass.
        
        We consider a geometric interpretation of LayerNorm and show that it consists of two components: 
        \begin{inparaenum}[(a)]
            \item \emph{projection} of the input vectors to a  $d-1$ space that is orthogonal to the $\left[1,1,...,1\right]$ vector,
            and
            \item \emph{scaling} of all vectors to the same norm of $\sqrt{d}$. 
        \end{inparaenum}
        We show that each of these components is \emph{important for the attention layer that follows it in Transformers}:
        \begin{inparaenum}[(a)]
            \item \emph{projection} 
            allows the attention mechanism to create an attention query that attends to all keys equally, offloading the need to learn this operation by the attention;
            and
            \item \emph{scaling} allows each key to potentially receive the highest attention, and prevents keys from being ``un-select-able''.
        \end{inparaenum}
        We show empirically that Transformers do indeed benefit from these properties of LayeNorm in general language modeling and 
        even in computing simple functions such as ``majority''.
        Our code is available at \url{https://github.com/tech-srl/layer_norm_expressivity_role}.
    \end{abstract}

\section{Introduction}\label{Se:Intro}

LayerNorm \cite{ba2016layer} is the most commonly used normalization technique in modern neural networks such as Transformers \cite{vaswani2017attention}.

Originally, \citet{ba2016layer} motivated LayerNorm as an efficient way of normalizing the activations during the forward pass or providing distribution stability as in batch normalization \cite{ioffe2015batch}. Later,
\citet{xu2019understanding} and \citet{xiong2020layer} argued that more importantly than normalizing forward activations, LayerNorm stabilizes the gradients during the \emph{backward} pass.

However, in this work, we show that LayerNorm, which was originally proposed for RNNs, has an additional crucial role in the theoretical and practical \emph{expressivity} of the multi-head attention layer that follows it in Transformers.
\footnote{\citet{xiong2020layer} discuss the differences between placing LayerNorm \emph{before} and \emph{after} a Transformer layer. However, even when placing the LayerNorm \emph{after} the layer, it appears right before the multi-head attention of the \emph{next} layer.} %
That is, LayerNorm makes it easier for the Transformer to learn certain functions during training.

\begin{figure*}[ht!]

    \begin{subfigure}[t]{.45\linewidth}
        \centering
        w/o LayerNorm
        \begin{tikzpicture}[trim axis left,trim axis right, clip, scale=0.75]

    \definecolor{darkslategray38}{RGB}{38,38,38}
    \definecolor{lightgray204}{RGB}{204,204,204}
    \definecolor{color0}{rgb}{0.282352941176471,0.470588235294118,0.815686274509804}
    \definecolor{color1}{rgb}{0.933333333333333,0.52156862745098,0.290196078431373}

    \begin{axis}[
        tick pos=both,
    axis line style={lightgray204},
    x grid style={lightgray204},
    xlabel=\textcolor{darkslategray38}{X},
    xmajorgrids,
    xmin=-1.5, xmax=1.5,
    xtick style={color=darkslategray38},
    xtick={-1.5,-1,-0.5,0,0.5,1,1.5},
    xticklabels={,-1,-0.5,0,0.5,1,},
    y grid style={lightgray204},
    ylabel=\textcolor{darkslategray38}{Y},
    ymajorgrids,
    ymin=-1.5, ymax=1.5,
    ytick style={color=darkslategray38},
    ytick={-1.5,-1,-0.5,0,0.5,1,1.5},
    yticklabels={,-1,-0.5,0,0.5,1,},
    z grid style={lightgray204},
    zlabel style={rotate=-90.0},
    zlabel=\textcolor{darkslategray38}{Z},
    zmajorgrids,
    zmin=-1.5, zmax=1.5,
    ztick style={color=darkslategray38},
    ztick={-1.5,-1,-0.5,0,0.5,1,1.5},
    zticklabels={,-1,-0.5,0,0.5,1,},
    zticklabel pos=right,
    view={60}{10},
    legend cell align={left},
    legend pos=north east,
    legend entries={keys, queries},
    ]
    \addplot3 [color1, opacity=0.5, only marks, mark=*, mark size=2.5, mark options={solid, draw=black}, x=x, y=y, z=z]
    file {figures/keys_no_LN.data};
    \addplot3 [color0, opacity=0.5, only marks, mark=triangle*, mark size=2.5, mark options={solid, draw=black}, x=x, y=y, z=z]
    file {figures/queries_no_LN.data};
    \end{axis}
    
    \end{tikzpicture}
    
        \caption{Without LayerNorm, the model has learned key and query vectors without any apparent geometric structure.}
        \label{fig:no_ln_keys_queries}
    \end{subfigure}
    \hfill
    \begin{subfigure}[t]{.45\linewidth}
        \centering
        w/ LayerNorm
        \begin{tikzpicture}[trim axis left,trim axis right, clip, scale=0.75]

    \definecolor{darkslategray38}{RGB}{38,38,38}
    \definecolor{lightgray204}{RGB}{204,204,204}
    \definecolor{color0}{rgb}{0.282352941176471,0.470588235294118,0.815686274509804}
    \definecolor{color1}{rgb}{0.933333333333333,0.52156862745098,0.290196078431373}
    
    \begin{axis}[
        tick pos=both,
    axis line style={lightgray204},
    x grid style={lightgray204},
    xlabel=\textcolor{darkslategray38}{X},
    xmajorgrids,
    xmin=-1.5, xmax=1.5,
    xtick style={color=darkslategray38},
    xtick={-1.5,-1,-0.5,0,0.5,1,1.5},
    xticklabels={,-1,-0.5,0,0.5,1,},
    y grid style={lightgray204},
    ylabel=\textcolor{darkslategray38}{Y},
    ymajorgrids,
    ymin=-1.5, ymax=1.5,
    ytick style={color=darkslategray38},
    ytick={-1.5,-1,-0.5,0,0.5,1,1.5},
    yticklabels={,-1,-0.5,0,0.5,1,},
    z grid style={lightgray204},
    zlabel style={rotate=-90.0},
    zlabel=\textcolor{darkslategray38}{Z},
    zmajorgrids,
    zmin=-1.5, zmax=1.5,
    ztick style={color=darkslategray38},
    ztick={-1.5,-1,-0.5,0,0.5,1,1.5},
    zticklabels={,-1,-0.5,0,0.5,1,},
    zticklabel pos=right,
    view={60}{10},
    legend cell align={left},
    legend pos=north east,
    legend entries={keys, queries},
    ]
    \addplot3 [color1, opacity=0.5, only marks, mark=*, mark size=2.5, mark options={solid, draw=black}, x=x, y=y, z=z]
    file {figures/keys_with_LN.data};
    \addplot3 [color0, opacity=0.5, only marks, mark=triangle*, mark size=2.5, mark options={solid, draw=black}, x=x, y=y, z=z]
    file {figures/queries_with_LN.data};
    \end{axis}
    
    \end{tikzpicture}
    
        \caption{LayerNorm projects the key vectors onto the same hyperplane so that the model can
        learn to align the queries to be orthogonal to the keys.}
        \label{fig:ln_keys_queries}
    \end{subfigure}

    \begin{subfigure}[t]{.45\linewidth}
        \centering
        \input{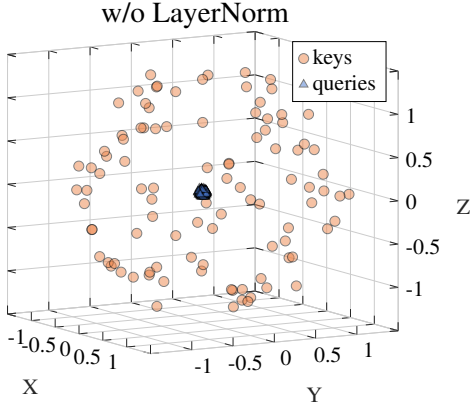}
        \caption{Without LayerNorm, there are ``unselectable'' key vectors that cannot be selected by getting the maximal attention score (marked in darker colors).}
        \label{fig:no_ln_heat_map}
    \end{subfigure}
    \hfill
    \begin{subfigure}[t]{.45\linewidth}
        \centering
        \input{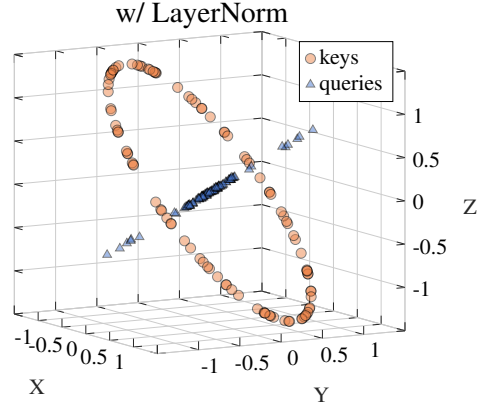}
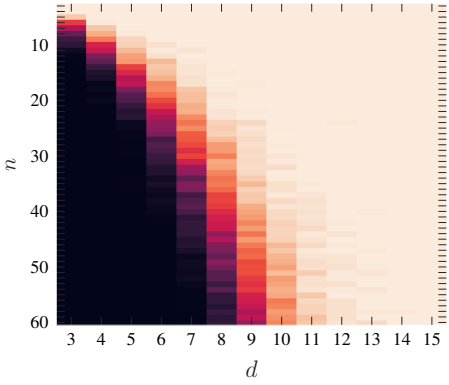
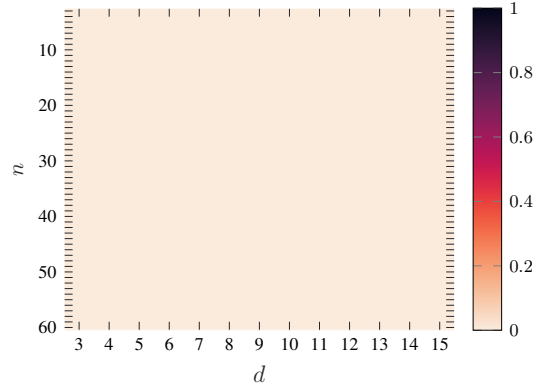
        \caption{LayerNorm eliminates the problem of ``unselectable'' key vectors: Applying LayerNorm allows any key to get the highest attention score.}
        \label{fig:ln_heat_map}
    \end{subfigure}

    \caption{
    \Cref{fig:no_ln_keys_queries,fig:ln_keys_queries} show the effect of \emph{projection} in LayerNorm, which makes all 
    keys lie on the hyperplane that is orthogonal to the $\v1$ vector.
    \Cref{fig:no_ln_heat_map,fig:ln_heat_map} show the effect of \emph{scaling}, where $n$ is the number of vectors, $d$ is the dimension, and the color represents the average fraction of ``unselectable'' key vectors.
    } 
    \label{fig:first}
\end{figure*}

LayerNorm can be seen as two independent 
components: \emph{projection} and \emph{scaling}, that were merged into a single operator.
First, LayerNorm \emph{projects} its inputs onto a particular $d-1$ space that is orthogonal to the ``ones'' $\v1=\left[1,1,...,1\right]$  vector. This allows the attention layer that follows the LayerNorm to create queries that are close to $\v1$, and thus attend to all keys equally, when needed, regardless of the identity of the keys. 
In \Cref{Se:Importance} we show that this projection helps, for example, computing the ``majority'' among token types in a sequence.

\Cref{fig:no_ln_keys_queries} shows how without LayerNorm, 
the keys and queries in a Transformer's attention have no apparent geometric structure.
In contrast, \Cref{fig:ln_keys_queries} shows that LayerNorm has projected all keys to the hyperplane that is orthogonal to the $\v1$ vector. Further, the attention mechanism has learned queries that are close to $\v1$, making them attend equally to any possible key, when trained to compute ``majority''.
We analyze and prove this in \Cref{Se:Average}.

The second component of LayerNorm is \emph{scaling}: We show that LayerNorm scales the projected input to have an $\ell^2$ norm of exactly $\sqrt{d}$.
In \Cref{Se:Importance}, we show that scaling the input vectors prevents the problem of ``unselectable'' keys \cite{demeter2020stolen, grivas2022low}, where some key vectors are contained in the convex hull formed by the other keys, 
and thus can never get the highest attention score.
\Cref{fig:no_ln_heat_map,fig:ln_heat_map} show
the average fraction (out of 100 runs) of ``unselectable'' vectors which were randomly drawn from the normal distribution. 
As shown in \Cref{fig:no_ln_heat_map}, without LayerNorm, the probability of getting ``unselectable'' keys can be very high in certain settings.
Nonetheless, as shown in \Cref{fig:ln_heat_map}, with LayerNorm, every key vector is always selectable. We analyze and prove this in \Cref{Se:Unbounded}.

These results reveal new aspects of the commonly used LayerNorm and show its importance to the attention mechanism in Transformers.

\section{Decomposing LayerNorm}\label{Se:Decomposing}

Given an input $\vx\in\mathbb{R}^{d}$, LayerNorm is defined as the following quotient:\footnote{Following \citet{xu2019understanding} and for simplicity, we drop the learned bias and gain terms.}
\begin{equation}\label{eq:LayerNorm}
    \vy= \frac{\vx - \vmu}{\sigma}
\end{equation}
where $\mu$ is the coordinate-wise average of $\vx$ and $\sigma$ is the coordinate-wise standard deviation:
\begin{equation}
    \mu = \frac{1}{d}\sum_{i=1}^d x_i, \quad \sigma = \sqrt{\frac{1}{d}\sum_{i=1}^d (x_i - \mu)^2}
\end{equation}
\begin{equation}
    \vmu= \left[\mu,\mu,...,\mu\right]\in\mathbb{R}^d
\end{equation}

We start with the numerator $\vx-\vmu$ and show that it corresponds to the projection of $\vx$ onto the hyperplane $\mathcal{H}$ defined by the normal vector $\v1=\left[1,1,...,1\right]\in\mathbb{R}^d$:
\uri{I changed this: }
\begin{equation}
    \begin{split}
        &\left(\vx - \vmu\right) \cdot \v1 = \vx \cdot \v1 - \vmu \cdot \v1 \\
        &\sum_{i=1}^d x_i - \left(\frac{1}{d}\sum_{i=1}^d x_i\right) \cdot d  = 0 
    \end{split}
\end{equation}
That is, $\vx - \vmu$ is always orthogonal to the $\v1$ vector.
Next, we show that the denominator scales the projected vector to have a norm of exactly $\sqrt{d}$:
\begin{equation}
    \begin{split}
    ||\vx-&\vmu|| = \sqrt{\sum_{i=1}^d \left(x_i-\mu\right)^2}\\
    &= \sqrt{d}\sqrt{\frac{1}{d}\sum_{i=1}^d \left(x_i-\mu\right)^2} = \sqrt{d} \cdot \sigma
    \end{split}
\end{equation}
Then, 
 $\sigma$ in the denominator of LayerNorm scales the projected vector to have a norm of exactly $\sqrt{d}$.

LayerNorm can thus be seen as two independent components:
\begin{inparaenum}[(a)]
    \item \emph{projection} of the input vectors onto the hyperplane orthogonal to $\v1$, and 
    \item \emph{scaling} of the projected vectors to have a norm of $\sqrt{d}$.
\end{inparaenum}

\section{Expressivity Role in Attention}\label{Se:Importance}
Each of the components of LayerNorm supports the  Transformer's attention in a different way.
\emph{Projection} helps 
creating a query that attends to all keys equally, when needed,
while \emph{scaling} helps the model to avoid the problem of ``unselectable'' keys.

Recall that in Transformers, given vectors $\vq$ and $\vk$, the attention scoring function 
is defined as:
\begin{equation}
    s\left(\vq,\vk\right) = \frac{\left(\vq\mQ\right)\left(\vk\mK\right)^\top}{\sqrt{d}}=
    \left(\frac{\vq\mQ\mK^\top}{\sqrt{d}}\right)\vk^\top
\end{equation}
From this point, we refer to $\left(\frac{\vq\mQ\mK^\top}{\sqrt{d}}\right)$ as ``query'', and to $\vk$ as ``key''.

\subsection{Projection}

Projecting all attention keys to the same hyperplane
can help attention
attending to all keys equally.
Since all projected keys
are orthogonal to the hyperplane's normal $\v1$,
this fact can be exploited in the training process by learning weights such that
the queries 
will be parallel to $\v1$. That is, the attention can learn weights such that
$\left(\frac{\vq\mQ\mK^\top}{\sqrt{d}}\right) \approx c\cdot \v1$, which will result in
$\left(\frac{\vq\mQ\mK^\top}{\sqrt{d}}\right) \cdot \vk \approx 0$ and thus 
$s\left(\vq,\vk\right)=0$
for any key $\vk$.

Giving an equal score to all the keys can help, for example, in computing ``majority'', where the model needs to find the most frequent token in the input.
In \Cref{Se:Average}, we show that in the ``majority'' task, a Transformer learns
to align the queries to be orthogonal to the keys, 
allowing a much faster convergence. %

\subsection{Scaling}
\emph{Scaling} the attention keys to the same size allows a Transformer to avoid the problem of ``unselectable'' keys, where there are keys to which the attention cannot focus and give the highest score.

Let $\mathcal{S}=\left\{\vh_1,...,\vh_{n-1},\vh_n\right\}$ be a set of key vectors, such that $\vh_n$ lies within the convex hull formed by the other vectors in $\mathcal{S}$.
Due to the linearity of the attention scoring function $s$, the attention mechanism cannot select $\vh_n$ by giving it the highest attention score. We formulate this in the following theorem:

\begin{theorem}
    Given a set of vectors $\mathcal{S}=\left\{\vh_1,...,\vh_{n-1},\vh_n\right\}$ such that $\vh_n$ is interior to the convex hull of $\mathcal{S}$, then for all $\vv\in\mathbb{R}^d$ (s.t. $\vv \neq \vzero$):
\begin{equation*}
    \max_{i\in[n-1]}
        \vv^\top\vh_i
     > \vv^\top\vh_n
     \label{Th:theorem-1}
    \end{equation*}

\end{theorem}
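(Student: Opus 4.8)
The plan is to exploit the fact that an interior point of a convex hull can be perturbed in \emph{every} direction and still remain inside the hull. Because $\vh_n$ is interior to $\mathrm{conv}\left(\mathcal{S}\right)$, it is not an extreme point, so deleting it from the generating set leaves the hull unchanged; hence $\vh_n$ is in fact interior to $\mathrm{conv}\left(\{\vh_1,\ldots,\vh_{n-1}\}\right)$. I would record this reduction first, because the theorem's maximum ranges only over $i\in[n-1]$, and I want every convex combination appearing later to use only $\vh_1,\ldots,\vh_{n-1}$.

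Next, fix an arbitrary $\vv\neq\vzero$. Since $\vh_n$ is interior, there is an open ball of radius $\varepsilon>0$ around $\vh_n$ contained in $\mathrm{conv}\left(\{\vh_1,\ldots,\vh_{n-1}\}\right)$; in particular $\vh_n+\varepsilon'\vv$ lies in the hull for any small enough $\varepsilon'>0$ (namely $\varepsilon'<\varepsilon/\|\vv\|$). Writing this point as a convex combination $\vh_n+\varepsilon'\vv=\sum_{i=1}^{n-1}\mu_i\vh_i$ with $\mu_i\ge 0$ and $\sum_{i}\mu_i=1$, and taking the inner product with $\vv$, I obtain
\begin{equation*}
\vv^\top\vh_n+\varepsilon'\|\vv\|^2=\sum_{i=1}^{n-1}\mu_i\left(\vv^\top\vh_i\right)\le\max_{i\in[n-1]}\vv^\top\vh_i .
\end{equation*}
Since $\vv\neq\vzero$ we have $\varepsilon'\|\vv\|^2>0$, so rearranging gives $\vv^\top\vh_n<\max_{i\in[n-1]}\vv^\top\vh_i$, which is exactly the claim.

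The main obstacle is obtaining the \emph{strict} inequality rather than merely $\le$: writing $\vh_n$ directly as a convex combination $\sum_i\lambda_i\vh_i$ and bounding $\vv^\top\vh_n=\sum_i\lambda_i\left(\vv^\top\vh_i\right)\le\max_i\vv^\top\vh_i$ only yields non-strictness, and equality can genuinely occur when $\vh_n$ sits on a face. The perturbation $\vh_n+\varepsilon'\vv$ is precisely what injects the strictly positive slack $\varepsilon'\|\vv\|^2$ and rules equality out. The one hypothesis that must be read carefully is that ``interior'' is meant in the full-dimensional topological sense in $\mathbb{R}^d$: if the points spanned only a proper affine subspace, then choosing $\vv$ orthogonal to that subspace would make $\vv^\top\vh_i$ constant over all $i$ and force equality, so the existence of an open ball around $\vh_n$ really does require $\mathrm{conv}\left(\mathcal{S}\right)$ to have nonempty interior in $\mathbb{R}^d$.
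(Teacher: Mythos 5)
Your proof is correct, and it takes a genuinely different route from the paper's. The paper does not argue this geometrically from scratch: it invokes the ``stolen probability'' theorem of \citet{demeter2020stolen} as a black box --- an interior embedding can never receive the maximal dot-product softmax probability --- and then converts that statement about probabilities into the dot-product inequality via monotonicity of softmax. Your argument is self-contained and elementary: you first discard $\vh_n$ from the generating set (legitimate, since an interior point is not extreme, so the hull is unchanged and the convex combination uses only $\vh_1,\ldots,\vh_{n-1}$, matching the range of the maximum), then perturb $\vh_n$ to $\vh_n+\varepsilon'\vv$ inside the hull, and extract the strictly positive slack $\varepsilon'\|\vv\|^2$ when testing against $\vv$. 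That perturbation is exactly the right device for obtaining \emph{strict} inequality, which the naive bound $\vv^\top\vh_n=\sum_i\lambda_i\vv^\top\vh_i\le\max_i\vv^\top\vh_i$ cannot deliver. Your closing caveat is also well taken and applies equally to the paper's statement: ``interior'' must mean interior in the topological sense in $\mathbb{R}^d$ (full-dimensional hull); otherwise choosing $\vv$ orthogonal to the affine span of $\mathcal{S}$ forces equality and the claim fails as written. As for what each approach buys: the paper's citation keeps its proof short and ties the theorem to the line of work on stolen probability and un-argmaxable tokens that motivates the application, while yours gives a rigorous, dependency-free argument that makes transparent exactly where strictness and the non-degeneracy hypothesis are used --- arguably a cleaner foundation for the claim than the paper's own reduction.
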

This means that key vectors that are inside the convex hull cannot be selected by getting the highest attention score.
Applying LayerNorm to the keys ensures that all keys are \emph{scaled} to the same size, and thus none of them lies inside the convex hull of $\mathcal{S}$. 
This allows the attention mechanism to potentially focus and select any desired key. 
The proof of \Cref{Th:theorem-1} is provided in \Cref{Ap:proof-theorem-1}.

In \Cref{Se:Unbounded} we show that this happens in practice when we train a Transformer on language modeling: There are key vectors that lie within the convex hull and therefore cannot be selected by receiving the maximal attention.

\section{Experimental Results}\label{Se:Results}
In this section, we empirically show the effects of the LayerNorm components -- \emph{projection} and \emph{scaling} -- on the attention mechanism in Transformers.
We first show how a Transformer learns to use the \emph{projection} of LayerNorm to compute ``majority'';
then, we show that \emph{scaling}
allows the model to avoid the problem of ``unselectable'' keys, allowing the attention mechanism to focus on any key.

\subsection{Computing Majority}\label{Se:Average}
We demonstrate the ability to compute ``majority''  using the \emph{projection} property. In this task, the goal is to predict the majority token type in a sequence. Given a sequence of tokens $t_1,t_2,...,t_n\in \left\{C_1,C_2,...,C_k\right\}$, the goal is to predict the token type $C_i$ that occurs the most among $t_1,t_2,...,t_n$. For $a,a,b,b,b,c,c$, for example, the model is trained to predict the output $b,b,b,b,b,b,b$.
Solving this task can be performed simply using exact averaging of the keys.

We trained a single-layer Transformer encoder with dimension $d=8$ and a single attention head. We experimented with standard LayerNorm compared to a LayerNorm without \emph{projection} (having a numerator of simply $\vx$ in \Cref{eq:LayerNorm}, similarly to the LayerNorm variant of \citet{zhang2019root}). We trained each model 10 times using different random seeds.

\begin{figure}[t!]
        \centering
        \begin{subfigure}[t]{0.95\linewidth}
            \input{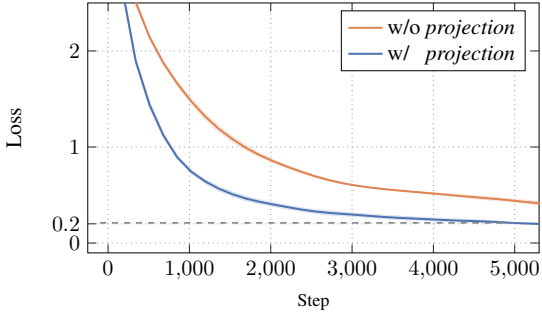}
            \caption{
            Training loss:
            \emph{With} projection, the model converges faster compared to the model \emph{without} projection, which required 3x more steps. 
            }
            \label{fig:majority_loss}
        \end{subfigure}
        \\[1ex]
        \begin{subfigure}[t]{0.95\linewidth}
            \input{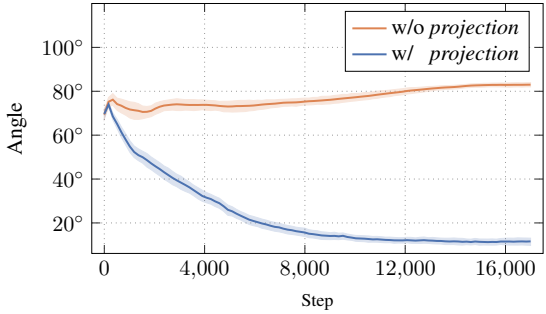}
            \caption{
            The mean angle of the queries to the $\v1$ vector.
            \emph{With}  projection, 
            since all keys are orthogonal to $\v1$,
            the model has learned to align the queries to be parallel to $\v1$ and thus give equal attention to all keys.
            }
            \label{fig:majority_angle}
        \end{subfigure}
        \caption{
        The training loss and mean angle of the queries to $\v1$ in the ``majority'' task across 10 runs with and without \emph{projection}.
        }
        \vspace{-4mm}
        \label{fig:majority}
\end{figure}

\paragraph{Results}
\Cref{fig:majority_loss} shows that \emph{with} projection, the model converges faster compared to \emph{without} projection. We hypothesize that since
all key vectors are orthogonal to the $\v1$ vector, the model can exploit this geometric structure, and learning the task is made easier. 
\Cref{fig:majority_angle} shows that indeed, the model \emph{with} projection has learned to align the queries to the $\v1$ vector, decreasing the angle between the queries and $\v1$.
On the contrary, a model \emph{without} projection has to learn 
 to solve this task ``from scratch''. This model also converged eventually, but it required 3x more training steps.
\Cref{fig:majority-test} shows a similar trend in the models' test accuracies.

\input{majority_test}

\subsection{Unselectable Keys}\label{Se:Unbounded}

\begin{table}[t!]
    \centering
        \begin{tabu}{lccccc}
        \toprule
        Model & $L_1$ & $L_2$ & $L_3$ & $L_4$ \\
        \midrule
        w/o \emph{scaling} & 51.0 & 32.2 & 34.7 & 36.8 \\
        w/\hphantom{o}  \emph{scaling} & \textbf{0} & \textbf{0} & \textbf{0} & \textbf{0} \\
    \bottomrule
    \end{tabu}
    \caption{The fraction of ``unselectable'' key vectors right before the attention mechanism of each layer of a language model without \emph{scaling}. LayerNorm solves the ``unselectable'' keys problem using the \emph{scaling} property. Without \emph{scaling}, there are key vectors that cannot be selected by the attention mechanism.}
    \label{tab:unargmaxable-no-scale}
\end{table}

We examined the fraction of ``unselectable'' keys in a Transformer model with and without the \emph{scaling} component of LayerNorm using the method presented in \citet{grivas2022low}.
We trained a 4-layer language model (based on GPT2 architecture \cite{radford2019language}) with $d=8$ on Wikipedia %
\footnote{\url{https://huggingface.co/datasets/wikipedia}, \texttt{20220301.en} split.} 
for 50K steps, and analyzed the inputs to the attention layer in each layer using sequences from the validation set of SQuAD \cite{squad}.

\paragraph{Results}
\Cref{tab:unargmaxable-no-scale} shows the following results: \emph{Without} \emph{scaling}, there are at least 32\% ``unselectable'' keys that cannot receive maximal attention score \emph{in each layer}. In contrast, \emph{scaling} removes this problem and allows the model to potentially focus on any key. 
In \Cref{fig:unselectable_train_loss,fig:unselectable_test_loss}, we show that this difference in ``unselectable'' keys is also reflected in higher training and test losses for the model that does not use \emph{scaling}.

\begin{figure}[ht!]
    \centering
    \begin{subfigure}[t]{0.95\linewidth}
            \begin{tikzpicture}[scale=0.75]
    \definecolor{darkslategray38}{RGB}{38,38,38}
    \definecolor{lightgray204}{RGB}{204,204,204}
    \definecolor{steelblue76114176}{RGB}{76,114,176}
    \definecolor{peru22113282}{RGB}{221,132,82}

    \begin{axis}[
        axis line style={black},
        legend cell align={left},
        legend style={
        },
        height=6cm,
        width=9.5cm,
        legend pos=north east,
        legend entries={w/o \emph{scaling},w/\hphantom{o} \emph{scaling}},
        tick align=inside,
        tick pos=both,
        scaled x ticks = false,
        grid style={dotted, gray},
        xlabel={Step},
        xlabel style={font=\small},
        xmajorgrids,
        xmin=5000, xmax=52475,
        xtick style={color=white!15!black},
        ylabel={Loss},
        ylabel style={font=\small},
        ymajorgrids,
        ymin=7.10745, ymax=8,
        ytick style={color=white!15!black},
        ylabel near ticks,
        y label style={at={(-0.130,0.5)}},
        extra y ticks={0.2},
    ]
    \addplot [line width=1pt, peru22113282]
    table {%
    500 10.7058
    1000 10.4713
    1500 10.233
    2000 9.988
    2500 9.7409
    3000 9.4995
    3500 9.2659
    4000 9.0491
    4500 8.8372
    5000 8.6514
    5500 8.4808
    6000 8.3204
    6500 8.1783
    7000 8.0756
    7500 7.9734
    8000 7.8918
    8500 7.8419
    9000 7.7916
    9500 7.7735
    10000 7.7516
    10500 7.7395
    11000 7.7354
    11500 7.7229
    12000 7.704
    12500 7.6996
    13000 7.6818
    13500 7.6636
    14000 7.6525
    14500 7.6425
    15000 7.6227
    15500 7.6088
    16000 7.601
    16500 7.6006
    17000 7.5847
    17500 7.5752
    18000 7.5768
    18500 7.5696
    19000 7.5692
    19500 7.5531
    20000 7.5493
    20500 7.5386
    21000 7.5386
    21500 7.5347
    22000 7.5243
    22500 7.5186
    23000 7.5111
    23500 7.5075
    24000 7.4996
    24500 7.4934
    25000 7.4855
    25500 7.4803
    26000 7.4798
    26500 7.4743
    27000 7.4663
    27500 7.4487
    28000 7.4531
    28500 7.4436
    29000 7.4346
    29500 7.4389
    30000 7.4296
    30500 7.4174
    31000 7.4125
    31500 7.4153
    32000 7.4075
    32500 7.4025
    33000 7.3991
    33500 7.3904
    34000 7.3926
    34500 7.3725
    35000 7.3796
    35500 7.3886
    36000 7.3783
    36500 7.3753
    37000 7.3631
    37500 7.3693
    38000 7.366
    38500 7.3654
    39000 7.3516
    39500 7.3564
    40000 7.3519
    40500 7.3519
    41000 7.3385
    41500 7.3419
    42000 7.3382
    42500 7.3345
    43000 7.3299
    43500 7.3429
    44000 7.3371
    44500 7.3361
    45000 7.3381
    45500 7.3346
    46000 7.3288
    46500 7.3313
    47000 7.3254
    47500 7.325
    48000 7.3257
    48500 7.3209
    49000 7.3301
    49500 7.3272
    50000 7.3245
    };

    \addplot [line width=1pt, steelblue76114176]
    table {%
    500 10.7054
    1000 10.4719
    1500 10.234
    2000 9.9894
    2500 9.742
    3000 9.5006
    3500 9.2671
    4000 9.0504
    4500 8.8386
    5000 8.6529
    5500 8.4822
    6000 8.3217
    6500 8.1792
    7000 8.0763
    7500 7.9743
    8000 7.8927
    8500 7.8425
    9000 7.792
    9500 7.7736
    10000 7.7513
    10500 7.7387
    11000 7.7339
    11500 7.7206
    12000 7.7009
    12500 7.6961
    13000 7.6777
    13500 7.6595
    14000 7.6483
    14500 7.6379
    15000 7.6176
    15500 7.6023
    16000 7.5934
    16500 7.591
    17000 7.5729
    17500 7.5614
    18000 7.5602
    18500 7.5507
    19000 7.5476
    19500 7.5283
    20000 7.5217
    20500 7.5083
    21000 7.505
    21500 7.4978
    22000 7.4848
    22500 7.4762
    23000 7.466
    23500 7.4606
    24000 7.4511
    24500 7.4439
    25000 7.435
    25500 7.4296
    26000 7.4287
    26500 7.4228
    27000 7.4146
    27500 7.3979
    28000 7.4028
    28500 7.394
    29000 7.3855
    29500 7.3903
    30000 7.3813
    30500 7.37
    31000 7.3648
    31500 7.3683
    32000 7.3614
    32500 7.3566
    33000 7.3536
    33500 7.345
    34000 7.3479
    34500 7.3275
    35000 7.3352
    35500 7.3445
    36000 7.334
    36500 7.3308
    37000 7.319
    37500 7.3254
    38000 7.3223
    38500 7.3218
    39000 7.3079
    39500 7.3122
    40000 7.3082
    40500 7.3082
    41000 7.2951
    41500 7.2986
    42000 7.2947
    42500 7.2915
    43000 7.2873
    43500 7.3004
    44000 7.294
    44500 7.2936
    45000 7.2964
    45500 7.292
    46000 7.2866
    46500 7.2892
    47000 7.2829
    47500 7.2825
    48000 7.2831
    48500 7.2788
    49000 7.2877
    49500 7.2855
    50000 7.2829
    };

\end{axis}

\end{tikzpicture}
        \caption{Training loss: \emph{With} scaling, the model converge faster compared to the model \emph{without} scaling.}
        \label{fig:unselectable_train_loss}
    \end{subfigure}
    \begin{subfigure}[t]{0.95\linewidth}
            \begin{tikzpicture}[scale=0.75]
    \definecolor{darkslategray38}{RGB}{38,38,38}
    \definecolor{lightgray204}{RGB}{204,204,204}
    \definecolor{steelblue76114176}{RGB}{76,114,176}
    \definecolor{peru22113282}{RGB}{221,132,82}

    \begin{axis}[
        axis line style={black},
        legend cell align={left},
        legend style={
        },
        height=6cm,
        width=9.5cm,
        legend pos=north east,
        legend entries={w/o \emph{scaling},w/\hphantom{o} \emph{scaling}},
        tick align=inside,
        tick pos=both,
        scaled x ticks = false,
        grid style={dotted, gray},
        xlabel={Step},
        xlabel style={font=\small},
        xmajorgrids,
        xmin=5000, xmax=52475,
        xtick style={color=white!15!black},
        ylabel={Loss},
        ylabel style={font=\small},
        ymajorgrids,
        ymin=7.10745, ymax=8,
        ytick style={color=white!15!black},
        ylabel near ticks,
        y label style={at={(-0.130,0.5)}},
        extra y ticks={0.2},
    ]
    \addplot [line width=1pt, peru22113282]
    table {%
    500 10.58815765
    1000 10.35688877
    1500 10.11849117
    2000 9.87612915
    2500 9.63599968
    3000 9.402089119
    3500 9.178215027
    4000 8.967827797
    4500 8.772199631
    5000 8.594137192
    5500 8.433856964
    6000 8.291932106
    6500 8.167560577
    7000 8.064609528
    7500 7.978945732
    8000 7.913676739
    8500 7.866682529
    9000 7.835698128
    9500 7.817951202
    10000 7.804569244
    10500 7.794483185
    11000 7.783994675
    11500 7.770471573
    12000 7.755699635
    12500 7.740699291
    13000 7.723731518
    13500 7.70853138
    14000 7.694862843
    14500 7.682023048
    15000 7.67058897
    15500 7.660918713
    16000 7.652450085
    16500 7.645085812
    17000 7.638020992
    17500 7.631177902
    18000 7.625000477
    18500 7.618332863
    19000 7.61164856
    19500 7.606098175
    20000 7.600143433
    20500 7.59510994
    21000 7.589673519
    21500 7.583784103
    22000 7.579368114
    22500 7.573656559
    23000 7.569850445
    23500 7.562768936
    24000 7.558451176
    24500 7.553297997
    25000 7.548194885
    25500 7.542777061
    26000 7.536319733
    26500 7.531676769
    27000 7.525649548
    27500 7.517554283
    28000 7.512712002
    28500 7.505766869
    29000 7.49943161
    29500 7.494571686
    30000 7.48803091
    30500 7.482662201
    31000 7.47705555
    31500 7.471026421
    32000 7.465358257
    32500 7.460614204
    33000 7.455510139
    33500 7.451537609
    34000 7.446690559
    34500 7.442859173
    35000 7.438271523
    35500 7.434169769
    36000 7.430821419
    36500 7.426640511
    37000 7.423700333
    37500 7.42009449
    38000 7.416528225
    38500 7.414104462
    39000 7.411118031
    39500 7.408524513
    40000 7.406078815
    40500 7.404069901
    41000 7.401488781
    41500 7.398902416
    42000 7.397368431
    42500 7.394959927
    43000 7.393601418
    43500 7.391349792
    44000 7.390385628
    44500 7.389098167
    45000 7.387676716
    45500 7.386381626
    46000 7.38533783
    46500 7.384510517
    47000 7.383586884
    47500 7.382931709
    48000 7.382424831
    48500 7.381913185
    49000 7.381599426
    49500 7.381298542
    50000 7.381257534
    };
    \addplot [line width=1pt, steelblue76114176]
    table {%
    500 10.58867645
    1000 10.3576088
    1500 10.11983585
    2000 9.87726593
    2500 9.63749218
    3000 9.403622627
    3500 9.179793358
    4000 8.969419479
    4500 8.773835182
    5000 8.595761299
    5500 8.435460091
    6000 8.292439461
    6500 8.16880703
    7000 8.064229012
    7500 7.980135918
    8000 7.914477825
    8500 7.866100788
    9000 7.8355937
    9500 7.81762886
    10000 7.803906918
    10500 7.793042183
    11000 7.781525135
    11500 7.766689301
    12000 7.750727177
    12500 7.734770298
    13000 7.717261314
    13500 7.701801777
    14000 7.687759876
    14500 7.674412251
    15000 7.662183285
    15500 7.651196957
    16000 7.640724182
    16500 7.630444527
    17000 7.620294094
    17500 7.609977722
    18000 7.599876881
    18500 7.590127945
    19000 7.579855919
    19500 7.569705963
    20000 7.559968948
    20500 7.550923824
    21000 7.541402817
    21500 7.532032013
    22000 7.523539543
    22500 7.51459074
    23000 7.507865906
    23500 7.498946667
    24000 7.492763042
    24500 7.486328125
    25000 7.48077631
    25500 7.474948883
    26000 7.468165874
    26500 7.463563442
    27000 7.458331108
    27500 7.451378822
    28000 7.446918488
    28500 7.441514492
    29000 7.436664104
    29500 7.432415962
    30000 7.426876068
    30500 7.422685623
    31000 7.41805172
    31500 7.412661076
    32000 7.407913685
    32500 7.403532982
    33000 7.399259567
    33500 7.395570755
    34000 7.391477585
    34500 7.387401104
    35000 7.383240223
    35500 7.379859447
    36000 7.376809597
    36500 7.372546673
    37000 7.36975193
    37500 7.366511345
    38000 7.363131046
    38500 7.361083984
    39000 7.358507156
    39500 7.35581255
    40000 7.35347271
    40500 7.351573944
    41000 7.349138737
    41500 7.347048759
    42000 7.345486164
    42500 7.343604565
    43000 7.342066288
    43500 7.340027332
    44000 7.339292049
    44500 7.338184357
    45000 7.337085724
    45500 7.335737705
    46000 7.334742069
    46500 7.33415699
    47000 7.333363533
    47500 7.332689762
    48000 7.332261562
    48500 7.331927776
    49000 7.331605434
    49500 7.331315041
    50000 7.331252098
    };
\end{axis}

\end{tikzpicture}
        \caption{Test loss: \emph{With} scaling, the model achieves lower test loss faster compared to the model \emph{without} scaling.}
        \label{fig:unselectable_test_loss}
    \end{subfigure}
    \caption{The training and test loss in the language modeling task.}
    \label{fig:unselectable}
\end{figure}
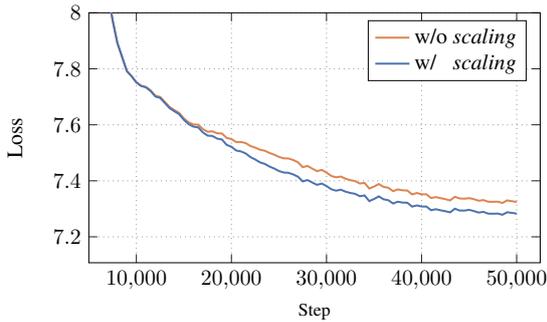
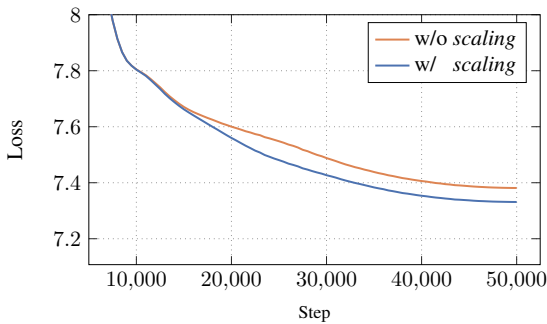

\section{Conclusion}\label{Se:Conclusion}
In this paper, we show that the commonly used LayerNorm component is crucial not only for the optimization process but also for the expressivity of attention in Transformers.
We decompose LayerNorm into two geometric operations: \emph{projecting} the input vectors onto a subspace that is orthogonal to the $\v1$ vector, and \emph{scaling} the projected vectors to have the same norm $\sqrt{d}$. 

We show that \emph{projection} helps to compute even simple tasks such as ``majority'' by performing an exact average of the keys and that \emph{scaling} helps to avoid the problem of ``unselectable'' keys. 

These findings are important for the community's understanding of attention and expressivity in Transformers. Further, these results raise a variety of follow-up questions, such as: why should the keys be orthogonal to the $\v1$ vector, instead of any, learnable, different vector for every layer? And what would happen if we force each layer's keys to be orthogonal to \emph{multiple} normal vectors?
To this end, we make our code publicly available at \url{https://github.com/tech-srl/layer_norm_expressivity_role}.

    \section{Limitations}

    In this work, we found that the implications of the geometric properties of LayerNorm affect mainly small models and are less evident for larger models. We hypothesize that with a large hidden dimension, a Transformer model can find other solutions for computing ``majority`` using gradient descent and is, therefore, less dependent on the \emph{projection} component. Further, we believe that the \emph{scaling} component is less useful for high dimensional models, 
    since with higher dimensions, it is less likely to encounter a set of vectors where some of them lie within the convex hull of the others. 
    Therefore, we encourage the community to use LayerNorm before attention layers, especially for small models that operate on long sequences.

    Moreover, the \emph{projection} component is clearly a linear operator that can be expressed by a linear layer before the LayerNorm, as we show in \Cref{Ap:projection}. Nevertheless, the importance of the projection holds as we discuss in \Cref{Se:Importance}, and the benefit of using this operator explicitly in LayerNorm is shown in \Cref{Se:Average}.  
\section*{Acknowledgements}
We thank Gail Weiss for the helpful discussions.

\bibliography{bib}{}
\bibliographystyle{acl_natbib}

\appendix
\FloatBarrier
\section{Proof of \Cref{Th:theorem-1}}\label{Ap:proof-theorem-1}
\addtocounter{theorem}{-1}

\begin{theorem}
	
\end{theorem}

We prove \Cref{Th:theorem-1} using \Cref{Th:stollen}, presented and proved in \citet{demeter2020stolen}:
\begin{theorem}[{\citet{demeter2020stolen}}]
    Let $C$ be the convex hull of the embeddings $\left\{x_i\right\}$ of a vocabulary $V$. If an embedding $x_i$ for word $w_i\in V$ is interior to $C$, then the maximum probability $P\left(w_i\right)$ assigned to $w_i$ using a dot-product softmax is bounded by the probability assigned to at least one word $w_i$ whose embedding is on the convex hull.
    \label{Th:stollen}
\end{theorem}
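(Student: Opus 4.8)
The plan is to reduce the probability-level statement to a purely linear fact about the logits, exploiting that the dot-product softmax is a strictly increasing function of each logit. Fix any context vector $q\in\R^d$ used in the softmax; word $w_j$ then receives probability $P(w_j\mid q)=\softmax(q^\top x_j)=\exp(q^\top x_j)/\sum_{\ell}\exp(q^\top x_\ell)$. Because the denominator is common to all words, for any two words we have $P(w_i\mid q)\le P(w_j\mid q)$ exactly when $q^\top x_i\le q^\top x_j$. Hence the whole claim is equivalent to showing that, for every $q$, some convex-hull word has a logit at least as large as that of the interior word $w_i$.

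First I would use interiority. Since $x_i\in C$, it is a convex combination $x_i=\sum_{k}\lambda_k x_k$ with $\lambda_k\ge 0$ and $\sum_k\lambda_k=1$, where the $x_k$ may be taken to be the vertices (extreme points) of $C$. Then for every $q$,
\[
 q^\top x_i=\sum_{k}\lambda_k\,(q^\top x_k)\le \max_{k}\,q^\top x_k ,
\]
because a convex average never exceeds the maximum of the averaged quantities. Equivalently, a linear functional over the polytope $C$ attains its maximum at a vertex, so $\max_{x\in C}q^\top x=q^\top x_{k^\star}$ for some vertex $x_{k^\star}$, and $x_i\in C$ gives $q^\top x_i\le q^\top x_{k^\star}$ at once. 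The maximizing $x_{k^\star}$ lies on the boundary of $C$, i.e. its word $w_{k^\star}$ is a convex-hull word.

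Combining the two steps, for every context $q$ we obtain $P(w_i\mid q)\le P(w_{k^\star}\mid q)$ with $w_{k^\star}$ a hull word (depending on $q$). Applying this at the context $q^\star$ maximizing $P(w_i\mid\cdot)$ shows that the maximal probability achievable by the interior word $w_i$ is bounded by the probability simultaneously assigned to a convex-hull word, which is exactly the asserted bound; en route it shows $w_i$ can never be the unique $\argmax$ of the dot-product softmax for any context.

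The main obstacle is interpretive rather than technical: pinning down what ``maximum probability $P(w_i)$'' quantifies over and what ``bounded by the probability assigned to at least one word on the convex hull'' means, since the quoted statement leaves the context vector implicit. I resolve this by reading it as a per-context inequality---for each $q$ there is always a dominating hull word---which then survives passing to the supremum over $q$. A secondary point to handle carefully is that interiority is invoked only to guarantee that $x_i$ is not itself a vertex, so the dominating word $w_{k^\star}$ is genuinely distinct from $w_i$ and genuinely on the hull; the averaging inequality itself holds for \emph{any} point of $C$, which is why only the non-strict bound (and not the strict inequality of \Cref{Th:theorem-1}) is needed here.
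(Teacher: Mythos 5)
Your proof is correct, but there is nothing internal to compare it against: the paper does not prove this statement at all --- it imports it verbatim from \citet{demeter2020stolen} and uses it as a black box in \Cref{Ap:proof-theorem-1} to derive \Cref{Th:theorem-1}. Your argument --- reduce from probabilities to logits via the shared softmax denominator and monotonicity, write the interior $x_i$ as a convex combination of vertices, and conclude $q^\top x_i \le \max_k q^\top x_k$ because a linear functional on a polytope is maximized at a vertex --- is the standard proof, and essentially the one given in the cited source. It is worth noting that your route \emph{inverts} the paper's logic: the paper deduces the purely linear inequality of \Cref{Th:theorem-1} from this probability-level theorem (invoking softmax monotonicity in the opposite direction), whereas you deduce the probability-level theorem from the linear fact; your self-contained derivation thus shows the detour through the external result is dispensable, since the linear inequality is the primitive statement. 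Your per-context reading of the informally stated theorem (the quoted version even recycles the symbol $w_i$ for the dominating word) is the right resolution, and passing to the supremum over contexts is then immediate from the per-context inequality --- which is also the cleaner thing to assert, since the dominating hull word varies with $q$ and the supremum need not be attained.

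Two small refinements. First, your closing remark slightly understates what interiority buys: it guarantees $x_i$ lies off the boundary entirely, not merely that it is not a vertex; this is what makes the dominating word genuinely distinct, and it also upgrades the averaging inequality to the strict form $q^\top x_i < \max_k q^\top x_k$ for every $q \neq 0$ (perturb $x_i$ in the direction of $q$ inside $C$), which is exactly the strictness that \Cref{Th:theorem-1} requires and which can fail for boundary-but-non-vertex points. Second, and relatedly, for a degenerate (lower-dimensional) hull the ambient interior is empty, so the strict version implicitly assumes either a full-dimensional hull or relative interiority with $q$ not orthogonal to the affine hull; the non-strict bound you actually use is immune to this and fully covers the statement as quoted.
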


\begin{proof}[Proof of \Cref{Th:theorem-1}]
    According to \Cref{Th:stollen}, since $\vh_n$ is interior to the convex hull of $\mathcal{S}$, 
    the maximum probability assigned to $\vh_n$
    is bounded by the maximum probability assigned to $\vh_i$ for some $i \in [1-n]$, that is on the convex hull of $\mathcal{S}$.
    Since probability in Transformers is computed as a dot-product of the final hidden state of the Transformer $\vu$ and the embedding vector, we can write:
    \begin{equation}
        \vu^{\top}\vh_i > \vu^{\top}\vh_n
    \end{equation}
    for \emph{any} $\vu \in \mathbb{R}^d$ (the probability is computed as a \emph{softmax} of the dot-product logits, but since softmax is a monotonic function, a higher probability after the softmax necessarily implies a higher logit score).

    Therefore, it also implies that for any $\vv = \vu \in \mathbb{R}^d$:
    \begin{equation}
        \max_{i\in[n-1]}
        \vv^\top\vh_i
     > \vv^\top\vh_n
    \end{equation}

\end{proof}

\section{Characteristics of the Normalized Vectors}\label{Se:character}
In this section, we discuss the characteristics of the LayerNorm inputs that are been normalized to the same point. 
Recall that the \emph{projection} ensures that the normalized output lies on the hyperplane $\mathcal{H}$ defined by the normal vector $\v1=\left[1,1,...,1\right]\in\mathbb{R}^d$.

Let $\vv$ be a unit vector in $\mathcal{H}$:
\begin{equation}
    \vv\bot\v1 \land ||\vv||=1
\end{equation}
Therefore
\begin{equation}
    \sum_{i=1}^d \vv_i = 0
\end{equation}
Let $\mathcal{M}$ be a \textbf{2D plane} that is defined using $\vv$ and $\v1$. Its parametric representation is:
\begin{equation}
    \mathcal{M}: s \vv + t \v1
\end{equation}
Finally, let $\vx$ be a vector in $\mathcal{M}$. Therefore, there exist $\alpha,\beta\in\mathbb{R}$ 
such that
\begin{equation}
   \vx =\alpha \vv + \beta \v1
\end{equation}
Next, we apply LayerNorm to $\vx$. First we project $\vx$ onto $\mathcal{H}$:
\begin{equation}
    \begin{split}
        &\vx - \vmu = \\
        &=\alpha\vv + \beta\v1 - \frac{1}{d}\sum_{i=1}^d \left(\alpha\vv_i + \beta\right)\v1\\
        &=\alpha\vv + \beta\v1 - \alpha\left(\frac{1}{d}\sum_{i=1}^d\vv_i\right)\v1 - \beta\v1\\
        &=\alpha\vv - \alpha\left(\frac{1}{d}\underbrace{\sum_{i=1}^d\vv_i}_{=0}\right)\v1\\
        &=\alpha\vv
    \end{split}
\end{equation}

Then, we scale the projected vector to be with a norm of $\sqrt{d}$ and get
\begin{equation}
    \text{LayerNorm}\left(\vx\right) = \sqrt{d}\frac{\alpha \vv}{||\alpha \vv||}
 \end{equation}
 We split to cases:\footnote{
    As implied from the original formulation of LayerNorm \cite{ba2016layer}, LayerNorm is undefined for $\alpha=0$.}
\begin{equation}
    \text{LayerNorm}\left(\vx\right) = \begin{cases}
        \sqrt{d} \vv  & \alpha > 0 \\
        -\sqrt{d} \vv  & \alpha < 0
      \end{cases}
\end{equation}
\newcommand{\RightAngle}[4][5pt]{%
  \draw[] ($#3!#1!#2$) -- ($ #3!2!($($#3!#1!#2$)!.5!($#3!#1!#4$)$) $) -- ($#3!#1!#4$) -- #3  -- cycle;
}
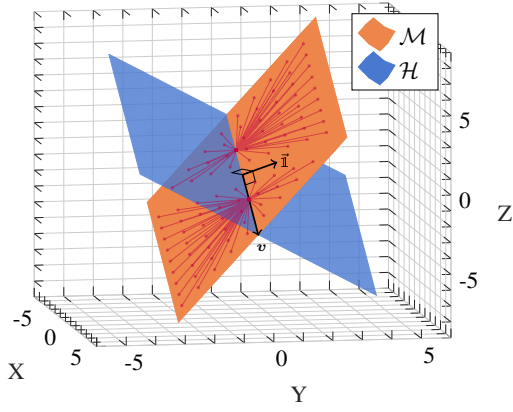
\begin{figure}
        \centering
\begin{tikzpicture}[trim axis left,trim axis right, clip, scale=0.80    ]

        \definecolor{darkslategray38}{RGB}{38,38,38}
        \definecolor{lightgray204}{RGB}{204,204,204}
        \definecolor{color0}{rgb}{0.282352941176471,0.470588235294118,0.815686274509804}
        \definecolor{color1}{rgb}{0.933333333333333,0.52156862745098,0.290196078431373}
        
        \begin{axis}[
            tick pos=both,
        axis line style={lightgray204},
        x grid style={lightgray204},
        xlabel=\textcolor{darkslategray38}{X},
        xmajorgrids,
        xmin=-6, xmax=6,
        xtick style={color=darkslategray38},
        xtick={-8,-7,-6,-5,-4,-3,-2,-1,0,1,2,3,4,5,6,7,8},
        xticklabels={,,,-5,,,,,0,,,,,5,,,},
        y grid style={lightgray204},
        ylabel=\textcolor{darkslategray38}{Y},
        ymajorgrids,
        ymin=-6, ymax=6,
        ytick style={color=darkslategray38},
        ytick={-8,-7,-6,-5,-4,-3,-2,-1,0,1,2,3,4,5,6,7,8},
        yticklabels={,,,-5,,,,,0,,,,,5,,,},
        z grid style={lightgray204},
        zlabel style={rotate=-90.0},
        zlabel=\textcolor{darkslategray38}{Z},
        zmajorgrids,
        zmin=-9, zmax=9,
        ztick style={color=darkslategray38},
        ztick={-8,-7,-6,-5,-4,-3,-2,-1,0,1,2,3,4,5,6,7,8},
        zticklabels={,,,-5,,,,,0,,,,,5,,,},
        zticklabel pos=right,
        view={80}{10},
        legend cell align={left},
        legend pos=north east,
        legend entries={$\mathcal{M}$, $\mathcal{H}$},
        ]

        \addplot3[surf,mesh/rows=2,mesh/cols=2,fill=color1,faceted color=color1, opacity=1, samples=2, samples y=2, domain=-3:3, y domain=-2.7:2.7] {-x+2*y};
        
        \addplot3[surf,mesh/rows=2,mesh/cols=2,fill=color0, faceted color=color0, draw=none,line width=0.00001pt, opacity=1, samples=2, samples y=2, domain=-3:3, y domain=-4:4] {-x-y+100};

        \addplot3[surf,mesh/rows=2,mesh/cols=2,fill=color0, faceted color=color0, draw=none,line width=0.00001pt, opacity=0.8, samples=2, samples y=2, domain=-3:3, y domain=-4:4] {-x-y};

        \addplot3[surf,mesh/rows=2,mesh/cols=2,fill=color1,faceted color=color1, opacity=1, samples=2, samples y=2, domain=-3:3, y domain=0:3] {-x+2*y};

        \coordinate (O) at (0,0,0);
        \coordinate (A) at (1,-2,1);
        \coordinate (B) at (1,1,1);
        \coordinate (C) at (1.25*2.449489742783179,0,1.25*-2.4494897427831774) ;

        \addplot3[->,draw=black,thick,no markers, domain=-3:3, y domain=-2:2] coordinates {
                (0,0,0) (1,1,1) 
        }       
        node[pos=1.2, opacity=1]{\scriptsize\color{black}{$\v1$}};

        \addplot3[->,draw=black,thick,no markers, domain=-3:3, y domain=-2:2] coordinates {
                (0,0,0) (1.25*2.449489742783179,0,1.25*-2.4494897427831774) 
        }       
        node[pos=1.2, opacity=1]{\scriptsize\color{black}{$\vv$}};

        \RightAngle{(A)}{(O)}{(B)};
        \RightAngle{(C)}{(O)}{(B)};

        \addplot3 [purple, opacity=0.5, mark=*, mark size=0.5, mark options={solid, draw=purple}, domain=-3:3, y domain=-2:2, x=x, y=y, z=z]
        file {figures/char_figure_data.data};
        \end{axis}
        
        \end{tikzpicture}
        
        \caption{LayerNorm maps the points of $\mathcal{M}$ to exactly two points in $\mathcal{H}$.}
        \label{fi:normalized_ap}
\end{figure}
To conclude, all vectors belonging to the 2D plane $\mathcal{M}$ are normalized to exactly two points, 
depending on their $\alpha$ component.

Since the subspaces $\mathcal{H}$ 
and $\left\{t\v1|t\in\mathbb{R}\right\}$ are direct sum of the whole space $\mathbb{R}^d$, each vector  $\vu\in\mathbb{R}^d$ has a unique representation as $\vu=\alpha \vv + \beta \v1$. That is, each vector $\vu\in\mathbb{R}^d$ belongs to some 2D plane $\mathcal{M}$, defined by $\v1$ and some vector $\vv\in\mathcal{H}$, and thus we can characterize the set of points that are being normalized to the same point.
\Cref{fi:normalized_ap} illustrates this behavior.

\section{Constructing the Projection}\label{Ap:projection}
The \emph{projection} of LayerNorm is a linear transformation, and thus, in this section, we show explicitly the construction of the projection matrix $\mP$, such that
\begin{equation}
    \mP\vx=\vx-\vmu
\end{equation}

Let $V=\mathbb{R}^d$,
$W=\left\{\alpha \v1 | \alpha \in \mathbb{R}\right\}$, and
$U=\left\{\vx | \vx \bot \v1=\left[1,1,...,1\right]\in\mathbb{R}^d\right\}$ be linear subspaces of $V$.

Let $B_U, B_W$ be the bases of $U$ and $W$ respectively:
\begin{align}
    B_U &= \left\{\vu_1,\vu_2,...,\vu_{d-1}\right\} \\
    B_W &= \left\{\v1\right\}
\end{align}
We can define a basis $C=B_U \cup B_W$ of $V$.

We also denote the standard basis $E$ of $V$: 
\begin{equation}
    E=\left\{\ve_1,\ve_2,...,\ve_d\right\}
\end{equation}

Since $U \cap W=\left\{0\right\}$, we have that $U \oplus W=V$ (direct sum).
Therefore, each $\vx\in V$ has a unique representation as $\vx=\vu+\vw$ where $\vu\in U$ and $\vw\in W$.

We can also write $\vx$ using $B_U, B_W$:
\begin{equation}
     \vx=\alpha \v1 + \sum^{d-1}_{i=1}\beta_i\vu_i
\end{equation}

Since we look for the projection of $\vx$ onto $U$ in the direction of $W$,
we want that
\begin{equation}
    \mP \vx=\sum^{d-1}_i\beta_i\vu_i
\end{equation}

To achieve this, we first, change the basis of $V$ from $E$ to $C$, remove the $\alpha \v1$ component, and then change back to the standard basis $E$.

Let $\mM^{C}_{E}$ be the change of basis matrix from basis C to the standard basis $E$:

\begin{align}
    \mM^{C}_{E} &= \begin{bmatrix}
        \text{|} & \text{|} &  & \text{|}  & \text{|} \\ 
        \vu_1 & \vu_2 & \hdots & \vu_{d-1}  & \v1 \\ 
        \text{|} & \text{|}&  & \text{|}  & \text{|} \\ 
         \end{bmatrix}
\end{align}
Therefore
\begin{equation}
    \mP = 
    \mM^{C}_{E}\mA\left(\mM^{C}_{E}
    \right)^{-1}
\end{equation}
Where
\begin{align}
    \mA &= \begin{bmatrix}
        \text{|} & \text{|} &  & \text{|}  & \text{|} \\ 
        \ve_1 & \ve_2 & \hdots & \ve_{d-1}  & \bm{0} \\ 
        \text{|} & \text{|}&  & \text{|}  & \text{|} \\ 
         \end{bmatrix}
\end{align}

To get an explicit $\mP\in\mathbb{R}^{d\times d}$, we instantiate the basis $B_U$:
\begin{align}
    B_U &= \left\{
         \begin{bmatrix}
            1-d \\
            1 \\ 
            \vdots \\
            1 \\ 
            1 
             \end{bmatrix},
        \begin{bmatrix}
            1 \\
            1-d\\ 
            \vdots \\
            1 \\ 
            1 
        \end{bmatrix}, ...,
        \begin{bmatrix}
            1 \\
            1 \\ 
            \vdots \\
            1-d \\
            1
             \end{bmatrix}
         \right\}
\end{align}
Therefore
\begin{align}
    \mM^{C}_{E} &= \begin{bmatrix}
        1-d & 1 & \cdots & 1 & 1 \\ 
        1 & 1-d & \cdots & 1 & 1 \\ 
        \vdots & \vdots & \ddots & \vdots  & \vdots \\ 
        1 & 1 & \cdots & 1-d & 1 \\
        1 & 1 & \cdots & 1 & 1
         \end{bmatrix}
\end{align}
\begin{align}
    \left(\mM^{C}_{E}\right)^{-1} &= \frac{1}{d}\begin{bmatrix}
        -1 &  &  &  & 1 \\ 
           & -1 &  &  & 1 \\ 
           &  & \ddots &  & \vdots \\ 
           &  &  & -1 & 1 \\ 
        1 & 1 & \cdots & 1 & 1 \\ 
         \end{bmatrix}
\end{align}
And we get
\begin{align}
    \mP &= \frac{1}{d}\begin{bmatrix}
        d-1 & -1 & \cdots& -1  \\ 
        -1 & d-1 & \cdots & -1  \\
        \vdots & \vdots & \ddots & \vdots \\
        -1 & -1 & \cdots & d-1
    \end{bmatrix}
\end{align}

\section{Unselectable Keys}\label{Ap:unargmaxable-keys}
\Cref{tab:unargmaxable_ln} shows the fraction of ``unselectable'' keys in a language model with LayerNorm before and after the application of LayerNorm.

\begin{table}[ht!]
    \centering
        \begin{tabu}{lccccc}
        \toprule
         & $L_1$ & $L_2$ & $L_3$ & $L_4$ \\
        \midrule
        Before LayerNorm & 44.8 & 28.5 & 22.3 & 26.1 \\
        After\hphantom{tf} LayerNorm & \textbf{0} & \textbf{0} & \textbf{0} & \textbf{0} \\
    \bottomrule
    \end{tabu}
    \caption{The fraction of ``unselectable'' key vectors before and after the LayerNorm followed by the attention mechanism of each layer of a language model. LayerNorm solves the ``unselectable'' keys problem using the \emph{scaling} property.}
    \label{tab:unargmaxable_ln}
\end{table}

\begin{table*}[ht!]
    \centering
        \begin{tabu}{l}
        \toprule
        whether you like rap music or loathe it, you can't deny either the tragic loss of two young men \\in the prime of their \textbf{talent} or the \textbf{power} of this movie. \\
        \\
        it is great summer \textbf{fun} to watch arnold and his buddy gerald bounce off a quirky cast of \\characters. \\
        \\
        the lion king was a roaring \textbf{success} when it was released eight years ago, but on imax it seems\\ better, not just bigger. \\
        \\
        it provides the grand, intelligent \textbf{entertainment} of a superior cast playing smart people amid \\a compelling plot. \\
        \\
        some of their jokes work, but most fail miserably and in the end, pumpkin is \textbf{far} more \\\textbf{offensive} than it is funny. 
        \\
        \bottomrule
    \end{tabu}
    \caption{Examples from the validation set of Stanford TreeBank \cite{socher-etal-2013-recursive}. Any token that is ``unselectable' in at least one of the layers of the language model (\Cref{Se:Unbounded}) is marked in \textbf{bold}.}
    \label{tab:sst2-decoder-unselectable}
\end{table*}

\begin{table*}[ht!]
    \centering
        \begin{tabu}{l}
        \toprule
        it's hard to \textbf{like} a film about a guy who is utterly unlikeable, and shiner, starring michael caine \\as an aging british boxing promoter desperate for a taste of fame and fortune, is certainly that.\\
        \\
        you'll gasp appalled and \textbf{laugh} outraged and possibly, watching the spectacle of a promising \\young lad treading desperately in a nasty sea, shed an errant tear. \\
        \\
        this is \textbf{wild} surreal \textbf{stuff}, but brilliant and the camera just kind of sits there and lets you look \\at this and its like you're going from one room to the next and none of them have any relation \\to the other. \\
        \\
        it’s a much more \textbf{emotional} journey than what shyamalan has given us in his past two movies, \\and gibson, stepping in for bruce willis, is the perfect actor to take us on the trip. \\
        \\
        \textbf{although} german cooking does not come readily to \textbf{mind} when considering the world's best \\cuisine, mostly martha could make deutchland a popular destination for hungry tourists. \\
        \bottomrule
    \end{tabu}
    \caption{Examples from the validation set of Stanford TreeBank \cite{socher-etal-2013-recursive}. Each \textbf{bold} token is ''unselectable' in at least one of the layers of a Transformer encoder without LayerNorm, trained on the sentiment analysis task. These examples show that important tokens that may be necessary for the task are ''unselectable'', which may affect the encoder's ability to learn the task correctly.}
    \label{tab:sst2-encoder-unselectable}
\end{table*}
To illustrate the impact of ''unselectable'' tokens, we give some examples from the validation set of Stanford TreeBank \cite{socher-etal-2013-recursive}, which is used as a benchmark for the sentiment analysis task. Our results show that important tokens may be ''unselectable''. We highlighted in bold any token that is ''unselectable' in at least one of the layers.
\Cref{tab:sst2-decoder-unselectable} shows the results of running a language model without LayerNorm (\Cref{Se:Unbounded}) on the validation set.
We also trained a 4-layer Transformer encoder without LayerNorms (based on BERT architecture \cite{devlin2018bert}) on the sentiment analysis task.
\Cref{tab:sst2-encoder-unselectable} shows the results of running this model on the validation set.

\section{Experimental Setup}\label{Ap:Setup}
In this section, we detail the setup of the experiments shown in \Cref{Se:Results}.

\subsection{Majority}
In the experiments, we used a learning rate of $0.001$ with a linear scheduler, a hidden size of $d = 8$ (total of $584$ learnable parameters), a batch size of $6000$, a sequence length of $50$, $20$ different classes, and the Adam optimizer.
We trained the models for 1000 epochs consisting of 17K steps.

The ``majority'' dataset contains 80K training examples and 20K test examples. Each example is a sequence of length $50$ consisting of tokens belonging to one of $20$ different classes.

\subsection{Language Modeling}
We trained a language model with GPT2 architecture \cite{radford2019language} using the Huggingface library, on the Huggingface processed
Wikipedia dataset (20220301.en split, licenses CC BY-SA and CC BY-SA) \cite{wikidump}, and tested it on SQuAD \cite{squad} (license CC BY-SA 4.0). We used these datasets only to demonstrate the ``unselectable'' keys problem, and thus we did not violate any of their license conditions.
We used the same hyperparameters as \citet{radford2019language}, except that we used a hidden size of $8$, $4$ hidden layers, a learning rate of $5\text{e-}5$, and a window size of $1024$ tokens, resulting in a model with 414K learnable parameters.
We train the model on the Wikipedia dataset (6.5M examples) for 50K steps and report our findings on 1000 randomly selected examples from the validation set of SQuAD. 

\subsection{Sentiment Analysis Task}
We trained a Transformer encoder with BERT architecture \cite{devlin2018bert} without LayerNorm layers with 50K steps on the Stanford TreeBank dataset \cite{socher-etal-2013-recursive} (\Cref{tab:sst2-encoder-unselectable}).
We used the same hyperparameters as \citet{devlin2018bert}, except that we used a hidden size of $8$, $4$ hidden layers, and a learning rate of $5\text{e-}5$, resulting in a model with 446K learnable parameters.

\end{document}